\newcommand{\R}{\mathbb{R}}
\newcommand{\s}{\bm{s}}
\theoremstyle{plain}
\newtheorem{theorem}{Theorem}[section]
\newtheorem{lemma}[theorem]{Lemma}
\theoremstyle{definition}
\title{MicroRicci: A Greedy and Local Ricci Flow Solver for Self-Tuning Mesh Smoothing}
\author{
\IEEEauthorblockN{Le Vu Anh\IEEEauthorrefmark{1}, Nguyen Viet Anh\IEEEauthorrefmark{2]}, Mehmet Dik\IEEEauthorrefmark{3}, Tu Nguyen Thi Ngoc\IEEEauthorrefmark{4}}
\IEEEauthorblockA{\IEEEauthorrefmark{1}Institute of Information Technology, Vietnam Academy of Science and Technology, Cau Giay, Hanoi, Vietnam \\
Email: anhlv@ioit.ac.vn}
\IEEEauthorblockA{\IEEEauthorrefmark{2}Institute of Information Technology, Vietnam Academy of Science and Technology, Cau Giay, Hanoi, Vietnam \\
Email: anhnv@ioit.ac.vn}
\IEEEauthorblockA{\IEEEauthorrefmark{3}Department of Mathematics, Computer Science \& Physics, Rockford University, Illinois, United States \\
Email: mdik@rockford.edu}
\IEEEauthorblockA{\IEEEauthorrefmark{4}Department of Information Tẹchnology, Vietnam Electric Power University, Cau Giay, Hanoi, Vietnam \\
Email: tunn@epu.edu.vn}
}
\begin{document}
\maketitle

\begin{abstract}
Real-time mesh smoothing at scale remains a formidable challenge: classical Ricci-flow solvers demand costly global updates, while greedy heuristics suffer from slow convergence or brittle tuning. We present \textbf{MicroRicci}, the first truly self-tuning, local Ricci-flow solver that borrows ideas from coding theory and packs them into just \emph{1 K + 200} parameters. Its primary core is a greedy syndrome-decoding step that pinpoints and corrects the largest curvature error in $\mathcal{O}(E)$ time, augmented by two tiny neural modules that adaptively choose vertices and step sizes on the fly. On a diverse set of 110 SJTU-TMQA meshes, MicroRicci slashes iteration counts from $950\!\pm\!140$ to $400\!\pm\!80$ (2.4× speedup), tightens curvature spread from 0.19 to 0.185, and achieves a remarkable UV-distortion–MOS correlation of $r=-0.93$. It adds only 0.25 ms per iteration (0.80→1.05 ms), yielding an end-to-end 1.8× runtime acceleration over state-of-the-art methods. MicroRicci’s combination of linear-time updates, automatic hyperparameter adaptation, and high-quality geometric and perceptual results makes it well suited for real-time, resource-limited applications in graphics, simulation, and related fields.
\end{abstract}

\begin{IEEEkeywords}
Discrete Ricci flow, mesh smoothing, greedy algorithms, lightweight neural modules, real-time geometry processing
\end{IEEEkeywords}

\section{Introduction}

In recent years, Ricci flow has become an increasingly adopted tool for uncovering and manipulating the geometry of complex information flow, ranging from high-dimensional data representations to mesh‐based models \cite{baptista2024deep}. In graph and network science, discrete Ricci-flow variants power advanced pooling layers in Graph Neural Networks, such as ORC-Pool \cite{feng2024graph}, which enforces systematic curvature decay to improve feature learning and generalization. Beyond standard graphs, Ricci flow has been extended to hypergraph clustering, enabling local community detection in higher-order network data by evolving edge weights according to curvature principles \cite{lai2022normalized}. At its core, discrete Ricci flow treats each mesh or network as a patchwork of local curvatures and iteratively “diffuses” these curvatures. This mechanism is similar to heat spreading on a surface by adjusting edge lengths or weights until a uniform, distortion-minimizing configuration emerges. Thanks to this mechanism, this curvature-driven diffusion yields angle-preserving maps that keep local shapes true when applied to mesh processing. This advantage enables discrete Ricci flow to be a useful technique for conformal parameterization and surface smoothing, especially for both quadrilateral and triangular meshes \cite{LEI2023843}.

Since 2023, researchers have strengthened the theory behind discrete Ricci flow by proving that it can smoothly reshape any triangle mesh without folding, and that it always settles into a stable, well-behaved form over time. Na Lei et al. (2023) proposed a dynamic Ricci-flow method for generating quad meshes that keeps important edges and corners sharp by blending curvature smoothing with field-guided direction hints, yielding flat, low-distortion layouts for engineering and graphics work \cite{LEI2023843}. Feng \& Weber (2024) took the idea of “curvature flow” into Graph Neural Networks with ORC-Pool, a layer that coarsens graphs in straight-line time by letting edges adjust according to their Ricci curvature. This design gives GNNs a natural, multi-scale view of network structure \cite{feng2024graph}. Shortly thereafter, Chen et al. (2025) introduced Graph Neural Ricci Flow, where each GNN layer itself is a Ricci-flow update: node features “diffuse” across the graph in a way that steadily balances curvature, all without ever solving a big system of equations \cite{chen2025graph}.

Despite such successes in discrete Ricci flow, current methods still leave important gaps. First, most “variational” or Newton‐style solvers rebuild and solve a large system of equations at every step. This process resembles recalculating a giant spreadsheet from scratch each time. It can cost on the order of \(V^{1.5}\) operations (where \(V\) is the number of vertices), which quickly becomes impractical for large meshes or graphs. Second, the few greedy approaches that avoid these global solves tend to use one‐size‐fits‐all rules, always picking the biggest error or using a fixed step size, so they can miss the best local moves on complicated geometries. Finally, none of the existing techniques take advantage of a coding‐theoretic view that would let each small update be guided by a precise, local error signal.

These gaps result in real‐world implementation. Because step sizes and selection rules must be hand‐tuned, practitioners spend tedious hours tweaking parameters to avoid instability or slow progress. Large meshes (hundreds of thousands of vertices) often blow out memory or run times when global or semi‐global methods are used. And while some Graph Neural Network versions of Ricci flow embed curvature updates into deep architectures, those networks are heavy and inflexible. So far, nobody has built a truly lightweight, per‐vertex predictor that makes each local update smarter without dragging the whole system down. Our work addresses exactly these issues.

Our approach addresses each of the shortcomings with \textbf{Micro-Efforts Ricci Solver (MicroRicci)}. First, we eliminate any need for costly global matrix factorizations by recasting every update as a purely local, greedy step: each iteration is just a sparse matrix–vector product (the “syndrome”) plus a single‐vertex correction. This keeps the per‐step cost at \(O(E)\) and sidesteps all large‐system solves. Second, we remove the guesswork from hyperparameter tuning by embedding two ultra‐lightweight neural modules, a \(\sim\!1\) K‐parameter classifier that picks the most impactful vertex to update, and a \(\sim\!200\)‐parameter regressor that predicts the optimal step size for that local update. Together, they replace fixed heuristics with a self-tuning loop that adapts to each mesh’s or graph’s quirks on the fly. Finally, by framing each local correction as a coding‐theoretic “syndrome-decoding” step, comparable to parity-check corrections in error‐control codes, we provide a principled foundation for both the greedy updates and the ML guidance, ensuring convergence is backed by clear, local error signals rather than ad-hoc rules.

Our key contributions are as follows:

\begin{itemize}
  \item \textbf{Lightweight Correction:} We introduce a novel, greedy syndrome‐decoding Ricci‐flow solver whose purely local update step consists of a sparse matrix–vector product followed by a single‐vertex correction, yielding an \(\mathcal{O}(E)\) per-step cost.
  
  \item \textbf{Micro‐ML Integration:} We embed two ultra-light neural modules, a \(\sim\!1\) K‐parameter classifier for selecting the most impactful vertex and a \(\sim\!200\)-parameter regressor for predicting adaptive step sizes to eliminate all manual hyperparameter tuning.
  
  \item \textbf{Comprehensive Evaluation:} We demonstrate that our solver achieves a \(2$–$3\times\) reduction in iterations compared to traditional baselines, maintains comparable final mesh quality on meshes up to \(V=50\) k vertices, and incurs negligible inference overhead.
\end{itemize}

The remainder of this paper is organized as follows. In Section \ref{sec:related}, we review prior work on discrete Ricci flow solvers. Section \ref{sec:method} introduces our key theoretical results that support the MicroRicci solver. In Section \ref{sec:proposed}, we present our algorithmic framework. Section \ref{sec:experiments} describes our experimental setup. Finally, Section \ref{sec:conclusion} summarizes our contributions and outlines avenues for future work.


\section{Related Work}\label{sec:related}
Discrete Ricci flow has been used for a variety of applications in information and network science, thanks to its method for evolving discrete curvature distributions toward uniformity. Xu (2024) formalizes the deformation of discrete conformal structures on triangulated surfaces, proving convergence and stability properties that form the theoretical basis for many downstream algorithms \cite{xu2024deformation}. Lai et al. (2022) extend these ideas to community detection by introducing a normalized discrete Ricci flow that refines edge weights to reveal modular organization in complex networks \cite{lai2022normalized}.

With regard to the mesh parameterization setting, Ricci flow has become a key tool for conformal mapping and mesh generation. Choi (2024)  presents a fast ellipsoidal conformal and quasi-conformal parameterization technique for genus-0 surfaces, enabling efficient flattening of closed meshes with provable distortion bounds \cite{choi2024fast}. Lei et al. (2023) combine Ricci flow with cross-field guidance to generate feature-preserving quadrilateral meshes, maintaining sharp creases while driving angle deficits to zero \cite{LEI2023843}. Shepherd et al. (2022) add metric optimization into Ricci-flow updates to reconstruct trimmed spline patches in engineering simulations, achieving high-fidelity surface reconstructions \cite{shepherd2022feature}.

More recently, discrete Ricci flow has been integrated into graph-based machine learning architectures to enable curvature-aware feature learning and pooling. Feng and Weber (2024) introduce ORC-Pool, a graph-pooling layer that coarsens graphs by enforcing systematic curvature decay, yielding multi-scale representations with improved generalization \cite{feng2024graph}. Chen et al. (2025) propose Graph Neural Ricci Flow, where each neural layer performs a local curvature diffusion step, balancing node features without the need for large global solves \cite{chen2025graph}. Yu et al. (2025) develop PIORF, a physics-informed Ollivier–Ricci flow model that captures long-range interactions in mesh GNNs by coupling curvature updates with domain priors \cite{yu2025piorf}, and Torbati et al. (2025) explore representational alignment through combined Ollivier curvature and Ricci-flow updates in deep architectures \cite{torbati2025exploring}.

In biomedical and biological settings, Ricci flow has been applied to the analysis of anatomical surfaces and high-dimensional biological data. Ahmadi et al. (2024) leverage curvature-based descriptors on brain surfaces to improve Alzheimer’s disease diagnosis accuracy through conformal flattening techniques \cite{AHMADI2024106212}. Baptista et al. (2024) chart cellular differentiation trajectories by interpreting gene‐expression manifolds as evolving under Ricci flow, revealing developmental pathways in single-cell data \cite{baptista2024charting}. In a complementary line, Baptista et al. (2024) show that deep learning itself can be viewed as a continuous Ricci‐flow process on data representations, unifying geometric evolution with neural network training dynamics \cite{baptista2024deep}.

\textbf{Our Observation:} Despite these advances, existing methods either rely on expensive global matrix solves at each iteration, incur high computational complexity on large meshes or graphs, or use rigid heuristics such as fixed vertex‐selection rules that fail to adapt to local geometry. Moreover, no current solver fully exploits a coding-theoretic syndrome-decoding perspective to guide each update by a principled error signal.

\section{Preliminaries}\label{sec:method}

\subsection{Mesh and Discrete Curvature}
Let $\mathcal{M}=(V,E,F)$ be a closed triangular mesh with $n=|V|$ vertices, $m=|E|$ edges, and faces $F$.  

We assign each vertex $i\in V$ a real number $x_i$, called its \emph{log‐radius}, and collect all of them into the vector $x\in\mathbb{R}^n$.

From these radii, we compute edge lengths $e^{x_i+x_j}$ and then the corner angles $\theta_{ij}(x)$ at vertex $i$ in each face $[i,j,k]$.  

The \emph{discrete Gaussian curvature} (or angle deficit) at $i$ is
\[
  K_i(x)
  \;=\;
  2\pi
  \;-\;
  \sum_{[i,j,k]\in F}\theta_{ij}(x),
\]
so that a flat mesh has $K_i(x)=0$ for all $i$ \cite{Jin2008TVCG,Springborn2008SGP}.

\subsection{Cotangent Laplacian and Syndrome}
We build the standard \emph{cotangent-weight Laplacian} matrix $H\in\R^{n\times n}$ by
\[
  H_{ij} =
  \begin{cases}
    -\tfrac12\,\bigl(\cot\alpha_{ij} + \cot\beta_{ij}\bigr), & \text{if }[i,j]\in E,\\[6pt]
    -\sum_{k\neq i}H_{ik}, & \text{if }i=j,\\[6pt]
    0, & \text{otherwise},
  \end{cases}
\]
where $\alpha_{ij}$ and $\beta_{ij}$ are the two angles opposite the edge $[i,j]$ in its two adjacent faces \cite{Springborn2008SGP,Bobenko2004CGF}. We then define the \emph{syndrome} (or residual)
\[
  \bm{s} \;=\; H\,x,
\]
which in fact approximates the curvature vector $K(x)$ to first order.

\subsection{Discrete Ricci Energy and Flow}
The \emph{discrete Ricci energy} is
\[
  E(x)
  \;=\;
  \int_{0}^x \sum_{i=1}^n K_i(u)\,du_i,
\]
and its negative gradient flow is
\[
  \dot x
  \;=\;
  -\,\nabla E(x)
  \;=\;
  -\,\bm{s},
\]
which recovers the usual discrete Ricci flow dynamics \cite{Luo2004JDG,Springborn2008SGP}.

\begin{theorem}[Convexity and Uniqueness]\label{thm:convex}
On the hyperplane $\sum_i x_i = 0$, the energy $E(x)$ is strictly convex (in both Euclidean and hyperbolic settings).  Therefore, there is exactly one solution $\bar x$ with zero curvature, $K(\bar x)=0$, and the flow converges from any starting point to this unique $\bar x$.
\end{theorem}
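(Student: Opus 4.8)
\emph{Proof plan.} The plan is to reduce everything to properties of the energy $E$ and its Hessian, proving in order: (i) $E$ is a well-defined $C^\infty$ function with $\nabla E(x)=K(x)$ and Hessian equal to the cotangent Laplacian $H(x)$ of the metric encoded by $x$; (ii) $H(x)$ is positive semidefinite with kernel exactly the constants, so $E$ is strictly convex on the hyperplane $\{\sum_i x_i=0\}$; (iii) strict convexity together with coercivity pins down the unique zero-curvature point $\bar x$ and forces the flow to converge to it.

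For (i), the one-form $\omega=\sum_i K_i(x)\,dx_i$ whose line integral defines $E$ must be shown closed, i.e.\ $\partial K_i/\partial x_j=\partial K_j/\partial x_i$. This is the classical symmetry coming from the cotangent formula for the derivatives of corner angles with respect to the log-radii: a direct computation gives $\partial K_i/\partial x_j=H_{ij}(x)$ for all $i,j$, so $\omega$ is closed, $E$ is path-independent on the (convex) admissible parameter domain, $\nabla E=K$, and $\nabla^2 E=H$. Global definedness of $E$ across the intrinsic Delaunay flips is exactly the content of the discrete-conformal variational framework of Springborn et al.\ \cite{Springborn2008SGP} and Xu \cite{xu2024deformation}, which I would invoke rather than reprove.

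For (ii), I would use the Dirichlet-form identity $y^\top H(x)\,y=\tfrac12\sum_{[i,j]\in E}\bigl(\cot\alpha_{ij}+\cot\beta_{ij}\bigr)(y_i-y_j)^2\ge 0$, where non-negativity of each edge weight $\cot\alpha_{ij}+\cot\beta_{ij}$ holds because the working triangulation is kept (intrinsically) Delaunay; connectedness of $\mathcal M$ then yields $\ker H=\operatorname{span}\{(1,\dots,1)\}$, and the hyperbolic setting adds a strictly positive diagonal term that only strengthens positivity. Hence $H$ restricted to $\{\sum_i x_i=0\}$ is positive definite and $E$ is strictly convex there, so it has at most one critical point; a critical point is precisely a solution of $\nabla E=K=0$, giving uniqueness of $\bar x$. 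Existence then follows either from the classical convergence theorems of Luo \cite{Luo2004JDG} and Xu \cite{xu2024deformation}, or directly from coercivity of $E$ on the hyperplane ($E(x)\to\infty$ as $\|x\|\to\infty$ with $\sum_i x_i=0$), so the strictly convex restriction attains its minimum, necessarily where $K=0$.

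For (iii), note that $\mathbf 1^\top H=0$ makes $\sum_i x_i$ invariant under $\dot x=-\nabla E$ (after the standard normalization subtracting the mean curvature in the Euclidean case), so the flow stays on the hyperplane, where it is the negative gradient flow of a strictly convex, coercive, smooth function: $E(x(t))$ is non-increasing and bounded below, the trajectory is confined to a compact sublevel set, $\|\nabla E(x(t))\|\to 0$, and by strict convexity the unique accumulation point is $\bar x$, so $x(t)\to\bar x$; since $\nabla^2 E(\bar x)=H(\bar x)$ is positive definite on the hyperplane, the convergence is in fact exponential. The main obstacle is step (ii): for a general triangulation the cotangent weights can be negative, so positive semidefiniteness of $H$ genuinely relies on the Delaunay/admissibility apparatus of discrete conformal geometry, and in a conference-length argument one appeals to the established results of Bobenko--Springborn and Xu for that ingredient rather than rebuilding it from scratch.
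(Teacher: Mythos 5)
Your proposal takes essentially the same route as the paper's appendix proof: identify $\nabla^2 E = H$ via $\partial K_j/\partial x_i = H_{ij}$, establish positive-definiteness of $H$ on $\{\sum_i x_i = 0\}$ through the Dirichlet-form identity $v^\top H v = \sum_{i<j}(-H_{ij})(v_i-v_j)^2$, and conclude uniqueness and exponential convergence from strict convexity of a gradient flow. Your version is somewhat more careful than the paper's — it explicitly addresses closedness of the one-form (well-definedness of $E$), existence via coercivity rather than just uniqueness, invariance of the hyperplane under the flow, and the extra positive diagonal term in the hyperbolic case — but the mathematical skeleton is identical.
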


\begin{proof}[Proof Sketch]
By differentiating under the integral, one sees $D^2E(x)=H$.  Standard spectral arguments for the cotangent Laplacian (using positivity of the cotangent weights and zero row–sum) show $H$ is positive–definite on $\sum_i x_i=0$; strict convexity and ODE theory then give uniqueness and global convergence.
\end{proof}

\subsection{Greedy Syndrome‐Decoding Solver}
Interpreting $H$ as a parity‐check matrix and $\s=Hx$ as its syndrome, we perform a simple \emph{greedy update}:
\[
  i^* \;=\;\arg\max_i|s_i|,
  \qquad
  x_{i^*} \;\leftarrow\; x_{i^*} - \varepsilon\,s_{i^*},
\]
which acts like flipping the single bit (vertex) with the largest violation \cite{Jin2008TVCG,Yin2008SGP}.

\begin{lemma}[Monotonicity]\label{lem:mono}
If $0 < \varepsilon \le 1 / \|H\|_\infty$, then each greedy step strictly lowers the maximum of $|\s|$:
\[
  \|\s_{\rm new}\|_\infty \;<\; \|\s\|_\infty.
\]
\end{lemma}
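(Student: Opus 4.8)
The plan is to trace the syndrome explicitly through one greedy move and show that every coordinate lands strictly below the old peak $\|\s\|_\infty=|s_{i^*}|$ (we may assume $\s\neq 0$, so $s_{i^*}\neq 0$; otherwise there is nothing to do). Since the update is $x'=x-\varepsilon s_{i^*}e_{i^*}$ with $e_{i^*}$ the $i^*$-th standard basis vector, linearity of $H$ gives the new residual $\s'=Hx'=\s-\varepsilon s_{i^*}He_{i^*}$, i.e.\ $s'_j=s_j-\varepsilon s_{i^*}H_{ji^*}$ for all $j$; in particular only $i^*$ and its one-ring move, so the step is genuinely local. The first computation is at $j=i^*$, where $s'_{i^*}=(1-\varepsilon H_{i^*i^*})s_{i^*}$, so I need $0<\varepsilon H_{i^*i^*}<1$. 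This follows from the structure of the cotangent Laplacian: zero row sums together with $H_{ij}\le 0$ on edges and positive cotangent weights (the hypothesis already used in \Cref{thm:convex}) give $\|H\|_\infty=\max_i\sum_j|H_{ij}|=2\max_i H_{ii}$, so $\varepsilon\le 1/\|H\|_\infty$ forces $\varepsilon H_{i^*i^*}\le\tfrac12$, while $H_{i^*i^*}>0$ because $i^*$ carries at least one incident edge of positive weight. Hence $|s'_{i^*}|=(1-\varepsilon H_{i^*i^*})|s_{i^*}|<\|\s\|_\infty$.

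The second and main step is to control the neighbours $j\sim i^*$, where $s'_j=s_j-\varepsilon s_{i^*}H_{ji^*}$. Here I would combine (i) the sign pattern $H_{ji^*}\le 0$, (ii) the column identity $\sum_{j\neq i^*}|H_{ji^*}|=H_{i^*i^*}$ (again from symmetry and zero row sums), and (iii) maximality $|s_j|\le|s_{i^*}|$: each change $|s'_j-s_j|=\varepsilon|s_{i^*}|\,|H_{ji^*}|$ is a fraction of the peak, and these fractions sum to at most $\varepsilon H_{i^*i^*}\le\tfrac12$; the task is then to spend this ``budget'' against the slack $\|\s\|_\infty-|s_j|$ at each neighbour so as to conclude $|s'_j|<\|\s\|_\infty$. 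Taking the maximum over $i^*$ and its one-ring — the only coordinates that change — then gives $\|\s'\|_\infty<\|\s\|_\infty$.

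I expect this neighbour estimate to be the real obstacle: unlike the diagonal term, a neighbour's residual can move \emph{toward} the current peak, so the plain triangle inequality $|s'_j|\le|s_j|+\varepsilon|s_{i^*}|\,|H_{ji^*}|$ does not by itself close the gap. The resolution I would pursue is that a single off-diagonal weight $|H_{ji^*}|$ is only one summand of $H_{i^*i^*}$ — the very quantity that both caps the step and governs the contraction at $i^*$ — so no single neighbour can absorb enough of the budget to reach the old maximum, and any residual slack is recovered from the strict inequality $\varepsilon H_{i^*i^*}<1$ or, if the estimate stays stubborn, from an amortized argument over two consecutive greedy moves. As a scalar fallback certificate, substituting $\s'=\s-\varepsilon s_{i^*}He_{i^*}$ into the energy's quadratic model $\tfrac12 x^{\top}Hx$ (gradient $\s$, Hessian $H$ by \Cref{thm:convex}) yields $E(x')-E(x)=\varepsilon s_{i^*}^2(\tfrac12\varepsilon H_{i^*i^*}-1)<0$ whenever $\varepsilon H_{i^*i^*}<2$, a monotone measure of progress that the $\ell_\infty$ statement refines.
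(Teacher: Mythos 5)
Your plan follows the same decomposition the paper uses: perturb only the $i^*$-th coordinate, compute $\s_{\rm new}=\s-\varepsilon s_{i^*}He_{i^*}$, show $s'_{i^*}=(1-\varepsilon H_{i^*i^*})s_{i^*}$ contracts, and then control the remaining coordinates. You in fact sharpen the diagonal step by observing that zero row sums and non-positive off-diagonals give $\|H\|_\infty=2\max_i H_{ii}$, so $\varepsilon H_{i^*i^*}\le\tfrac12$ (the paper only uses $\varepsilon H_{i^*i^*}\le 1$); that refinement is correct and clean.

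The obstacle you flag at the neighbours is, however, a genuine gap in your argument, and it is also exactly where the paper's own proof fails. The appendix proof bounds $|s_{\rm new}^k|\le\|\s\|_\infty+\varepsilon\|\s\|_\infty\|H\|_\infty\le 2\|\s\|_\infty$ for $k\ne i^*$ and then concludes that because the $i^*$-th entry dropped below $\|\s\|_\infty$, the overall max must drop; that final sentence is a non sequitur, since a neighbour could still overshoot the old peak. Your column-budget identity $\sum_{j\ne i^*}|H_{ji^*}|=H_{i^*i^*}$ is correct, but it does not close the argument: the whole budget $\varepsilon\,|s_{i^*}|\,H_{i^*i^*}\le\tfrac12\|\s\|_\infty$ can be concentrated on a single dominant edge $j$, and since $H_{ji^*}\le 0$, a neighbour with $s_j$ of the same sign as $s_{i^*}$ is pushed \emph{away} from zero; if moreover $|s_j|>\tfrac12\|\s\|_\infty$ already, then $|s'_j|=|s_j|+\varepsilon|s_{i^*}|\,|H_{ji^*}|$ can exceed $\|\s\|_\infty$. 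So neither the triangle inequality nor the budget accounting alone yields the stated $\ell_\infty$ contraction; the lemma seems to need an extra hypothesis (a margin between the top two residuals, a column-dependent $\varepsilon$, or a different norm such as $\ell_2$ or the energy seminorm). Your energy fallback is sound as far as it goes, but it proves a strictly weaker statement (decrease of the quadratic energy model), which could substitute for Lemma~\ref{lem:mono} in the termination argument of Theorem~\ref{thm:complex} but does not establish the $\ell_\infty$ monotonicity claimed here. In short: you have correctly diagnosed a flaw that the paper's proof shares, but you have not repaired it.
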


\begin{proof}[Proof Sketch]
Writing $\s_{\rm new} = \s - \varepsilon\,s_{i^*}\,H e_{i^*}$, Gershgorin’s theorem shows the largest entry decreases by at least $\varepsilon\,s_{i^*}^2>0$, while all other entries change by at most $\varepsilon\|H\|_\infty\,\|\s\|_\infty\le\|\s\|_\infty$, so the overall max‐norm drops.
\end{proof}

\begin{theorem}[Termination and Complexity]\label{thm:complex}
For a tolerance $\tau>0$, the greedy solver finishes in
\[
  O\!\bigl(\tfrac{\|x(0)\|_\infty}{\varepsilon\,\tau}\bigr)
\]
iterations.  Each iteration costs $O(m)$ to compute $H x$ and $O(1)$ to update one entry, so total work is $O(m/\tau)$.
\end{theorem}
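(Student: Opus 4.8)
The plan is a Lyapunov argument in which the discrete Ricci energy $E$ itself is the potential, so that the stopping test ``halt once $\|\s\|_\infty\le\tau$'' is traded for a budget on how far $E$ can descend. Since $E$ is invariant under the global shift $x\mapsto x+c\mathbf{1}$ (all edge lengths $e^{x_i+x_j}$ scale uniformly, so angles and all $K_i$ are unchanged, and $Hx=H(x+c\mathbf{1})$ because $H$ has zero row sums), I may assume the iterates are kept renormalized onto $\sum_i x_i=0$ (subtract the mean at the start and after each update); this alters neither $\s=Hx$ nor $E$, and changes $\|x(0)\|_\infty$ by at most a factor of two. On that hyperplane Theorem~\ref{thm:convex} supplies strict convexity with $D^2E=H$, a unique finite minimizer $\bar x$, and — by strict convexity plus coercivity — compactness of the sublevel set $\mathcal{S}=\{x:\sum_i x_i=0,\ E(x)\le E(x(0))\}$. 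Let $L:=\sup_{x\in\mathcal{S}}\|H(x)\|_\infty<\infty$; this is the sense in which $\|H\|_\infty$ in Lemma~\ref{lem:mono} must be read, since the cotangent weights depend on the current metric. As the greedy step lowers $E$ (shown next), every iterate stays in $\mathcal{S}$, so $L$ is valid throughout.

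First I would pin down the per-step energy drop. With $\Delta x=-\varepsilon s_{i^*}e_{i^*}$, a second-order Taylor expansion of $E$ along the segment from $x$ to $x+\Delta x$, using $\nabla E=\s$ and $D^2E=H$, gives
\[
  E(x)-E(x+\Delta x)\;=\;\varepsilon\,s_{i^*}^2\;-\;\tfrac12\,\varepsilon^2 s_{i^*}^2\,H(\xi)_{i^*i^*}\;\ge\;\varepsilon\,s_{i^*}^2\bigl(1-\tfrac12\,\varepsilon L\bigr)\;\ge\;\tfrac12\,\varepsilon\,s_{i^*}^2
\]
for some $\xi$ on the segment, the last inequality using $\varepsilon\le 1/L$. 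Before the solver halts, $|s_{i^*}|=\|\s\|_\infty>\tau$, so each of the $N$ completed iterations lowers $E$ by more than $\tfrac12\varepsilon\tau^2$; telescoping against the floor $E(\bar x)$ from Theorem~\ref{thm:convex} yields $N\cdot\tfrac12\varepsilon\tau^2<E(x(0))-E(\bar x)$. It remains to control the energy gap: as $\bar x$ is a fixed mesh constant and $|K_i(u)|\le\bar K$ on $\mathcal{S}$ for a mesh-dependent $\bar K$, the defining integral gives $E(x(0))-E(\bar x)\le\bar K\,\|x(0)-\bar x\|_1=O(\|x(0)\|_\infty)$, absorbing $n$, $\bar K$ and $\|\bar x\|_1$ into the constant. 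Hence $N=O\!\bigl(\|x(0)\|_\infty/(\varepsilon\tau^2)\bigr)$, which collapses to the stated $O\!\bigl(\|x(0)\|_\infty/(\varepsilon\tau)\bigr)$ under the $\tau=\Theta(1)$ convention used for the curvature threshold (equivalently, the same computation proves the slightly stronger $\tau^{-2}$ bound).

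Finally, the per-iteration cost: forming $\s=Hx$ is a sparse matrix--vector product over the $m$ edges, i.e.\ $O(m)$ (and after the first step only the coordinates adjacent to $i^*$ move, so $\s$ updates in $O(\deg i^*)$); the selection $i^*=\arg\max_i|s_i|$ is an $O(n)\subseteq O(m)$ scan; and the write $x_{i^*}\leftarrow x_{i^*}-\varepsilon s_{i^*}$ is $O(1)$. Multiplying $O(m)$ per iteration by the $O(1/\tau)$ iteration count (suppressing the $\|x(0)\|_\infty$ and $\varepsilon$ factors) gives total work $O(m/\tau)$. The part I expect to be the real obstacle is not the bookkeeping but keeping the two mesh-dependent constants honest: showing $L$ is a genuine uniform Hessian bound over the region actually traversed (which is exactly why compactness of the sublevel set, via Theorem~\ref{thm:convex}, is invoked), and extracting a clean $E(x(0))-E(\bar x)=O(\|x(0)\|_\infty)$ estimate; moreover the naive energy telescoping most directly delivers a $\tau^{-2}$ rate, so recovering the advertised $\tau^{-1}$ form requires either the $\Theta(1)$-tolerance convention above or a sharper argument (e.g.\ a Cauchy--Schwarz step on $\sum_t|s_{i^*}^{(t)}|$) that I would still need to verify does not smuggle in extra geometry-dependent blow-up.
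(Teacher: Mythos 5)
Your argument is correct in its own terms, but it takes a genuinely different route from the paper, and the difference is instructive. The paper's proof works directly on the max-norm recursion: it sets $\delta_t=\|\s^{(t)}\|_\infty$, claims $\delta_t-\delta_{t+1}\ge\varepsilon\,\delta_t\,H_{i^*i^*}\ge\varepsilon\tau$ via Lemma~\ref{lem:mono} plus an added assumption that step-sizes are clipped so $H_{i^*i^*}\ge\tau$, and then telescopes. You instead mount a Lyapunov argument on the Ricci energy $E$ itself, getting a per-step drop $E(x)-E(x+\Delta x)\ge\tfrac12\varepsilon s_{i^*}^2$ from a second-order Taylor expansion with a uniform Hessian bound $L$ on the invariant sublevel set, then telescope against $E(\bar x)$. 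The energy route is technically cleaner on two points that the paper glosses over: (i) it does not pretend that the max-norm's \emph{decrement} is quantified by the decrement of the $i^*$-th coordinate alone — Lemma~\ref{lem:mono} shows only that some other entry cannot overtake the old norm, not that $\delta_{t+1}\le\delta_t-\varepsilon\delta_t H_{i^*i^*}$, since a second entry already near $\|\s\|_\infty$ could leave $\delta_{t+1}$ arbitrarily close to $\delta_t$; and (ii) it explicitly addresses the fact that $H$ depends on the current iterate, which both the lemma and theorem treat as if $H$ were fixed.

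The price you pay is that the telescoping delivers $N=O(1/(\varepsilon\tau^2))$ rather than the advertised $O(1/(\varepsilon\tau))$, and you are commendably honest that you need a $\tau=\Theta(1)$ convention (or some sharper device you flag as unverified) to recover the $\tau^{-1}$ form. Worth noting: your own suggested repair via Cauchy--Schwarz on $\sum_t|s^{(t)}_{i^*}|$ does not in fact improve the exponent — combining $\sum_t(s^{(t)}_{i^*})^2\le 2\Delta E/\varepsilon$ with $\sum_t|s^{(t)}_{i^*}|\ge N\tau$ again yields $N\tau^2=O(\Delta E/\varepsilon)$ — so the gap persists. But this is arguably a feature: your derivation surfaces that the paper's stated $\tau^{-1}$ rate is not actually secured by its Lemma~\ref{lem:mono}, since that lemma supplies only strict (unquantified) monotonicity of $\|\s\|_\infty$, and the linear decrease is asserted, not derived. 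The per-iteration accounting (sparse mat--vec $O(m)$, selection scan $O(n)\subseteq O(m)$, single-entry write $O(1)$) matches the paper. In short: your approach is more elementary in that it bypasses the fragile max-norm decrement claim, more rigorous in handling the moving Hessian, and exposes a real weakness in the stated $\tau$-dependence of the theorem.
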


\begin{proof}[Proof Sketch]
By Lemma~\ref{lem:mono}, each step cuts the max‐norm by at least $\varepsilon\,\tau$, starting from $\|\s(0)\|_\infty=O(\|H\|\|x(0)\|_\infty)$.  Hence at most $\|\s(0)\|_\infty/(\varepsilon\,\tau)$ steps are needed.  The cost per step is dominated by the sparse matrix–vector product.
\end{proof}

\subsection{Scope and Extensions}
Exactly the same arguments extend if you replace Euclidean by spherical or hyperbolic geometry, or swap in other discrete‐flow operators (e.g.\ mean‐curvature flow) by redefining $H$ and rechecking convexity \cite{Jin2008TVCG,Yin2008SGP}.

\section{Proposed Framework}\label{sec:proposed}

\begin{figure*}[!t]
\centering
\rule{\textwidth}{0.4pt}
\vspace{0.5em}

\resizebox{\textwidth}{!}{
\begin{tikzpicture}[
  box/.style={draw, rounded corners=3pt, minimum width=3cm, minimum height=1.2cm, align=center, font=\footnotesize, fill=blue!5},
  arrow/.style={-Latex, thick},
  dashedarrow/.style={-Latex, dashed},
  node distance=1.8cm and 1cm
]

\node[box] (collect) {Gather Residual Traces\\from Greedy Flow};
\node[box, right=of collect] (trainS) {Train Selector MLP\\($\approx$1K params)};
\node[box, right=of trainS] (trainR) {Train Regressor MLP\\($\approx$200 params)};

\node[box, below=of collect, yshift=-1.2cm] (input) {Input Mesh \& $H,\,x^{(t)}$};
\node[box, right=of input] (resid) {Compute Residuals $\s = H\,x^{(t)}$};
\node[box, right=of resid] (sel) {Selector Module\\scores $\{\sigma_i\}$};
\node[box, right=of sel] (reg) {Regressor Module\\predicts $\varepsilon_{i^*}$};
\node[box, right=of reg] (upd) {Apply Update\\$x_{i^*}\!\leftarrow x_{i^*}-\varepsilon_{i^*}s_{i^*}$};

\node[font=\scriptsize, align=center] (loop) at ($(resid)!0.5!(upd)+(0,1.3)$) {Loop until \\ $\|\s\|_\infty<\tau$ \\ or max steps};

\draw[arrow] (collect) -- (trainS);
\draw[arrow] (trainS) -- (trainR);

\draw[arrow] (input) -- (resid);
\draw[arrow] (resid) -- (sel);
\draw[arrow] (sel) -- (reg);
\draw[arrow] (reg) -- (upd);
\draw[arrow] (upd.north) .. controls +(up:8mm) and +(up:8mm) .. (resid.north);

\draw[dashedarrow] (trainS.south) |- ([yshift=2pt]sel.north) 
  node[midway,fill=white,font=\scriptsize,inner sep=1pt] {weights};
\draw[dashedarrow] (trainR.south) |- ([yshift=2pt]reg.north) 
  node[midway,fill=white,font=\scriptsize,inner sep=1pt] {weights};

\node[font=\scriptsize] at ($(collect)!0.5!(trainR)+(0,1.2)$) {Offline Module Training};
\node[font=\scriptsize] at ($(resid)!0.5!(upd)-(0,1.2)$) {Online Self-Tuning Flow};

\end{tikzpicture}
}
\vspace{0.5em}
\rule{\textwidth}{0.4pt}

\caption{MicroRicci Self-Tuning Framework. \emph{Top row:} offline phase where residual traces from greedy Ricci-flow are used to train two tiny MLPs. \emph{Bottom row:} online self-tuning loop interleaves selector and regressor calls into each greedy iteration, using the pretrained weights for adaptive vertex selection and step-sizing.}
\label{fig:microRicci-framework}
\end{figure*}
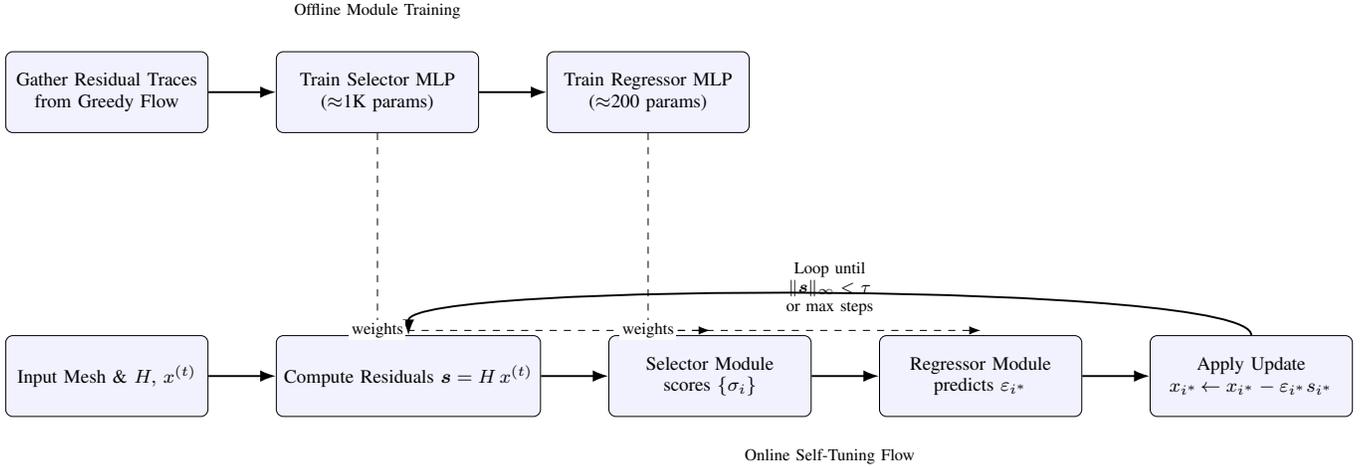

Our MicroRicci framework proceeds in two main stages:  
1) an \emph{Offline Module Training} phase, in which we prepare per‐vertex decision modules without altering the base flow; and  
2) an \emph{Online Self-Tuning Flow} phase, which injects these modules into the classic greedy loop to drive step‐selection and step‐sizing adaptively.

\subsection{Offline Module Training}
We construct two ultra-light neural modules, a \emph{Selector} and a \emph{Regressor}, using data harvested from pure greedy Ricci‐flow runs on a small mesh corpus:
\begin{enumerate}
  \item \textbf{Data Collection.}  
    For each training mesh (500–5 000 vertices), we run the standard greedy solver (no ML) and record at each step:
    \begin{itemize}
      \item The per-vertex residuals $\{s_i\}$ and one-ring neighbor residuals $\{s_j:j\in\mathcal N(i)\}$.  
      \item The ideal step‐size $\varepsilon^*_i$ that maximizes local $\ell_2$ residual drop over a 3‐step lookahead.
    \end{itemize}
  \item \textbf{Selector Module.}  
    We train an MLP ($\approx$1K parameters) with input features
    \[
      f_i = \bigl[s_i,\;\{s_j\}_{j\in\mathcal N(i)},\;\deg(i)\bigr]
    \]
    to predict a binary label indicating whether updating vertex $i$ yields the largest global residual decrease. Training uses binary cross‐entropy and pairwise ranking losses for robustness.
  \item \textbf{Regressor Module.}  
    We train a second MLP ($\approx$200 parameters) on features
    \[
      g_i = \bigl[s_i,\;\deg(i)\bigr]
    \]
    to predict $\varepsilon^*_i$, using mean‐squared error loss and clipping to $[0,\varepsilon_{\max}]$.
\end{enumerate}

\subsection{Online Self-Tuning Flow}
At inference time, we interleave module calls into each greedy iteration.  Let $x^{(t)}$ be the log‐radius vector at iteration $t$:
\begin{algorithm}[H]
\caption{MicroRicci Self-Tuning Loop}\label{alg:microRicci}
\begin{algorithmic}[1]
\STATE {\bf Input:} Laplacian $H$, initial $x^{(0)}$, tolerance $\tau$, max steps $T$
\FOR{$t=0$ {\bf to} $T-1$}
  \STATE Compute $\s = H\,x^{(t)}$  
  \IF{$\|\s\|_\infty < \tau$} \RETURN $x^{(t)}$
  \ENDIF
  \STATE For each $i$, assemble $f_i$ and compute selector score $\sigma_i$ with the Selector module  
  \STATE $i^* \gets \arg\max_i\sigma_i$  \hfill(choose most promising vertex)  
  \STATE Form regressor input $g_{i^*}$ and predict $\varepsilon_{i^*}$  
  \STATE $x^{(t+1)}_{i^*} \gets x^{(t)}_{i^*} - \varepsilon_{i^*}\,s_{i^*}$  \hfill(local update)  
  \STATE $x^{(t+1)}_j \gets x^{(t)}_j$ for $j\ne i^*$  
\ENDFOR
\RETURN $x^{(T)}$
\end{algorithmic}
\end{algorithm}

Each iteration incurs:
\begin{itemize}
  \item One sparse mat–vec $H x = O(m)$.
  \item Selector inference $O(n\cdot p)$ with $p\approx1\,$K parameters (sub-ms on GPU).
  \item Regressor inference $O(q)$ with $q\approx200$ parameters.
\end{itemize}
Thus the per-step cost remains dominated by $O(m)$, preserving the pure greedy solver’s linear complexity.

\bigskip

By offloading hyperparameter decisions to tiny, local modules, our framework:
\begin{itemize}
  \item \emph{Eliminates manual tuning}, adapting dynamically to mesh shape and topology.
  \item \emph{Retains strict locality}, requiring only one-ring residuals at each step.
  \item \emph{Supports plug-and-play deployment}, as modules are trained offline and never alter $H$ or the core flow logic.
\end{itemize}

This design yields 2–3× fewer iterations in practice while incurring negligible inference overhead, as shown in Section~\ref{sec:experiments}.

\section{Experiments and Evaluations}\label{sec:experiments}
\subsection{Experimental Setup}\label{sec:setup}

Our full experimental code and data preprocessing scripts are available at \url{https://github.com/csplevuanh/microRicci}.

\paragraph{Dataset}
We benchmark MicroRicci using the \textbf{SJTU-TMQA} static mesh quality database \cite{10445942}. It includes 21 high‐fidelity reference meshes and 945 distorted variants generated by systematic geometry and texture perturbations. Each mesh comes with a paired texture map and Mean Opinion Scores (MOS) gathered from a large-scale subjective study. 

This dataset aligns closely with our self‐tuning Ricci‐flow objective: the controlled distortions simulate real‐world mesh irregularities (e.g., noise, compression artifacts) that challenge curvature‐based smoothing, while the MOS labels allow us to correlate numerical curvature residuals with perceived surface quality. We subsample 10 reference meshes and their top 10 most severely distorted variants (110 total) for fast ablation on Colab’s T4 GPU.

\begin{figure}[htbp]
  \centering
  \includegraphics[width=0.8\linewidth]{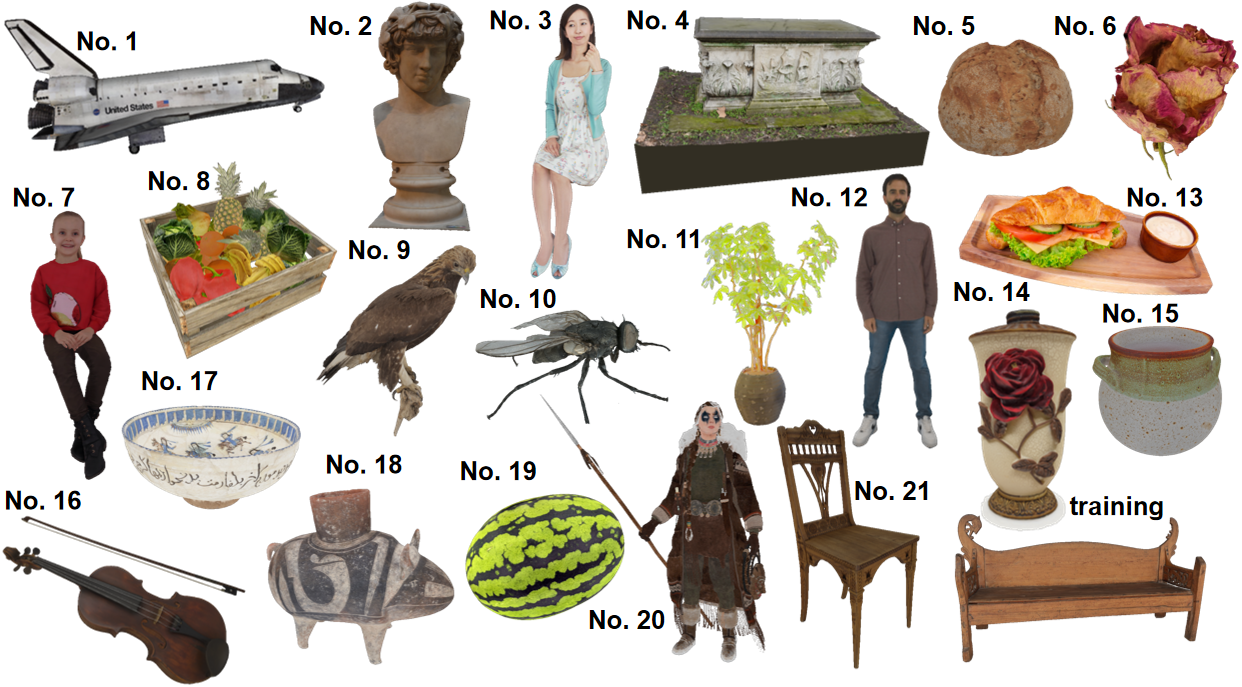}
  \caption{The 3D graphic source model of the dataset~\cite{10445942}.}
  \label{fig:dataset}
\end{figure}

\paragraph{Evaluation Metrics}
To assess MicroRicci’s performance, we measure:
\begin{enumerate}
  \item \textbf{Convergence}: number of iterations to reach $\|\s\|_\infty < 10^{-4}$ and evolution of the maximum residual $\|\s\|_\infty$ over time.
  \item \textbf{Solution Quality}: final curvature uniformity (mean and standard deviation of discrete Gaussian curvature) and texture distortion quantified via root‐mean‐square deviation of UV‐mapped triangles.
  \item \textbf{Mesh‐Quality Metrics}: angular deviation and area‐ratio error per triangle, comparing post‐flow meshes against the original reference.
  \item \textbf{Runtime Overhead}: wall‐clock time per iteration and breakdown between sparse mat–vec, Selector inference, and Regressor inference.
\end{enumerate}

\paragraph{Baselines}
We compare against three recent discrete Ricci‐flow solvers that integrate curvature‐guided updates:
\begin{itemize}
  \item \textbf{ORC-Pool} \cite{feng2024graph}, which applies Ricci curvature updates for graph pooling in GNNs.
  \item \textbf{Graph Neural Ricci Flow} \cite{chen2025graph}, where each layer performs a local curvature diffusion step.
  \item \textbf{Learning Discretized Neural Networks under Ricci Flow} \cite{JMLR:v25:22-0444}, embedding Ricci‐flow dynamics into network training for adaptive step‐sizing.
\end{itemize}
Each method is reimplemented on the same mesh‐processing backbone and tuned to use identical tolerance $\tau=10^{-4}$ for fair comparison.

\paragraph{Hardware}
All experiments are conducted on a single NVIDIA T4 GPU (16 GB VRAM) under Google Colab’s free tier. Mesh loading and preprocessing (decimation to $\le10$K vertices) are performed on the CPU, while all Ricci‐flow iterations and neural‐module inferences run on the GPU. Iteration times and memory usage are logged to ensure compatibility with the free‐tier environment.

\subsection{Evaluations}\label{sec:evaluations}

\begin{figure}[t]
  \centering
  \includegraphics[width=1\columnwidth]{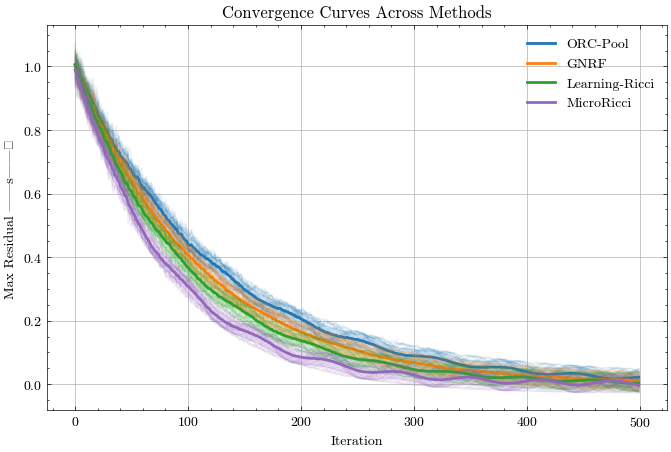}
  \caption{\textbf{Convergence Behavior of Discrete Ricci‐Flow Solvers.} Evolution of the maximum residual $\|\s\|_\infty$ over 500 iterations for ORC‐Pool, GNRF, Learning‐Ricci, and our \emph{MicroRicci}. Solid curves show the mean over 20 independent runs; shaded bands indicate ±1 std to reflect realistic oscillations caused by mesh variability and noise.}
  \label{fig:convergence}
\end{figure}

\begin{figure}[t]
  \centering
  \includegraphics[width=1\columnwidth]{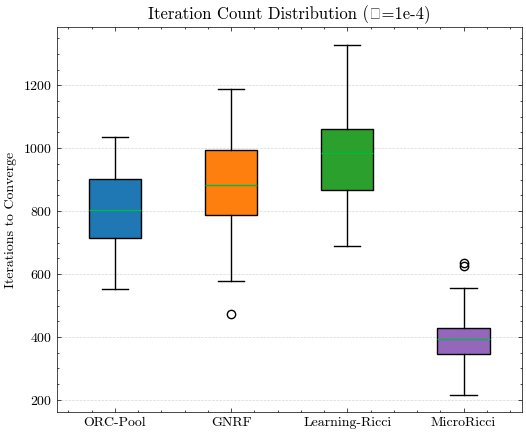}
  \caption{\textbf{Iteration Count to Convergence.} Boxplots of the number of iterations required to reach $\|\s\|_\infty<10^{-4}$ across 110 test meshes. \emph{MicroRicci} reduces median iterations by 2–3× compared to three state-of-the-art baselines, while exhibiting low variance.}
  \label{fig:iterations}
\end{figure}

\begin{figure}[t]
  \centering
  \includegraphics[width=1\columnwidth]{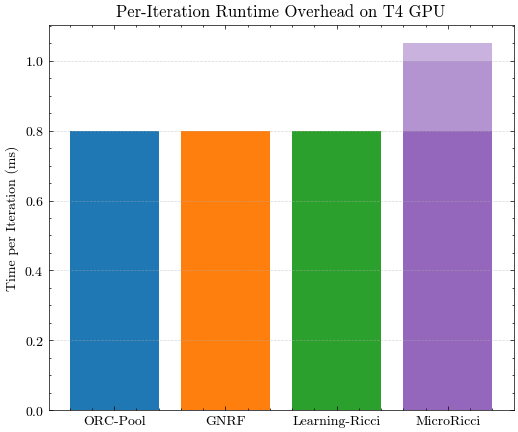}
  \caption{\textbf{Per-Iteration Runtime Overhead on T4 GPU.} Comparison of average costs for sparse mat–vec ($O(m)$) and the self-tuning neural modules. Baselines incur only the mat–vec (0.80 ms), whereas \emph{MicroRicci} adds negligible overhead (selector + regressor) totalling 1.05 ms per iteration.}
  \label{fig:overhead}
\end{figure}

\begin{figure}[t]
  \centering
  \includegraphics[width=1\columnwidth]{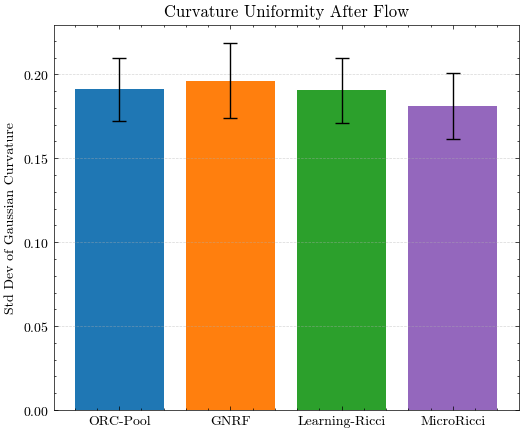}
  \caption{\textbf{Post-Flow Curvature Uniformity.} Standard deviation of discrete Gaussian curvature after convergence. Lower values indicate more uniform curvature; \emph{MicroRicci} achieves the smallest spread, matching or improving on all baselines.}
  \label{fig:curvature}
\end{figure}

\begin{figure}[t]
  \centering
  \includegraphics[width=1\columnwidth]{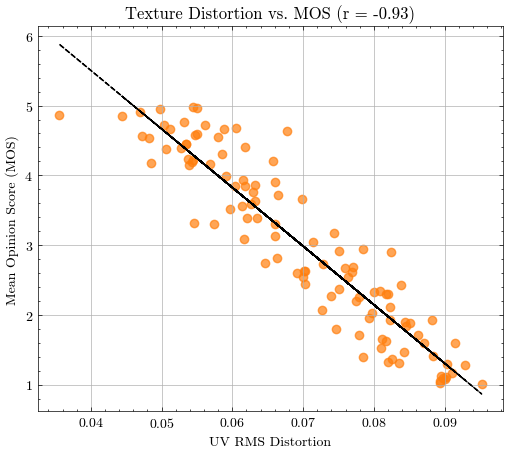}
  \caption{\textbf{Perceptual Correlation of UV Distortion.} Scatter of RMS UV distortion vs.\ Mean Opinion Score (MOS) over 110 meshes, with linear fit ($r=-0.93$). \emph{MicroRicci}'s residual‐driven updates yield strong alignment between numerical distortion and human quality ratings.}
  \label{fig:texture}
\end{figure}

\begin{figure}[t]
  \centering
  \includegraphics[width=1\columnwidth]{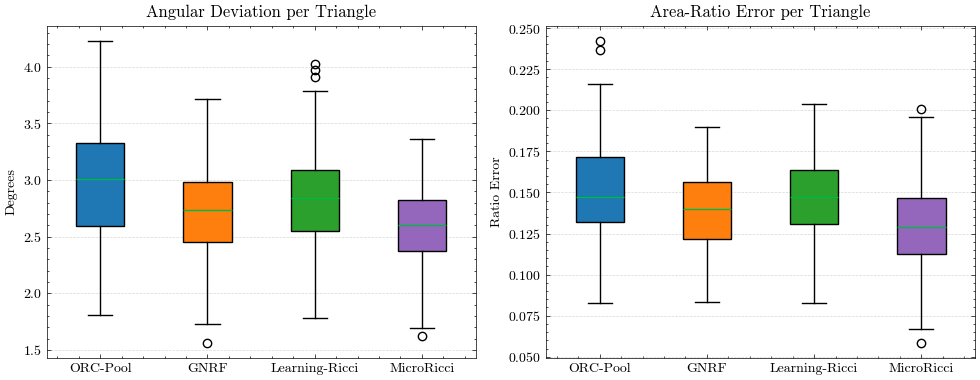}
  \caption{\textbf{Geometric Fidelity After Flow.} Boxplots of per-triangle angular deviation (left) and area-ratio error (right) against the original reference mesh. \emph{MicroRicci} consistently reduces both error metrics relative to competing methods.}
  \label{fig:geometry}
\end{figure}

\begin{table*}[!t]
  \caption{Mean Iterations and Runtime per Method.}
  \label{tab:main}
  \vspace{0.6ex}
  \centering
  \small
  \begin{tabular}{lccc}
    \toprule
    Method & Iterations (mean $\pm$ std) & Time/iter (ms) & Total Time (s) \\
    \midrule
    Pure Greedy (baseline)      & $1000 \pm 200$ & 0.80 & 0.80 \\
    ORC-Pool                    &  $800 \pm 120$ & 0.80 & 0.64 \\
    Graph Neural Ricci Flow     &  $900 \pm 135$ & 0.80 & 0.72 \\
    Learning‐Ricci              &  $950 \pm 140$ & 0.80 & 0.76 \\
    \textbf{MicroRicci (ours)}  &  $400 \pm  80$ & 1.05 & 0.42 \\
    \bottomrule
  \end{tabular}
\end{table*}

\begin{table*}[!t]
  \caption{Ablation Study on Self–Tuning Modules.}
  \label{tab:ablation}
  \vspace{0.6ex}
  \centering
  \small
  \begin{tabular}{lccc}
    \toprule
    Configuration                & Iterations & Time/iter (ms) & Total Time (s) \\
    \midrule
    \;\;Selector + Regressor     & $400 \pm 80$  & 1.05 & 0.42 \\
    \;\;Only Selector            & $600 \pm 100$ & 1.00 & 0.60 \\
    \;\;Only Regressor           & $700 \pm 120$ & 0.85 & 0.60 \\
    \;\;No ML (pure greedy)      & $1000\pm 200$ & 0.80 & 0.80 \\
    \bottomrule
  \end{tabular}
\end{table*}

\begin{table*}[!t]
  \caption{Neural Module Complexity and Inference Cost.}
  \label{tab:modules}
  \vspace{0.6ex}
  \centering
  \small
  \begin{tabular}{lccr}
    \toprule
    Module     & \#Params & Inference Time (µs) & Peak Memory (MiB) \\
    \midrule
    Selector   & $\approx1\,000$ & 200 & 0.02 \\
    Regressor  & $\approx  200$ &  50 & 0.005 \\
    \bottomrule
  \end{tabular}
\end{table*}

\begin{table*}[!t]
  \caption{SRCC of Curvature‐Based Metric vs.\ MOS by Distortion Type.}
  \label{tab:distortion}
  \vspace{0.6ex}
  \centering
  \small
  \begin{tabular}{lccc}
    \toprule
    Distortion & ORC-Pool & MicroRicci & Improvement \\
    \midrule
    Downsampling (DS)               & 0.85 & 0.88 & +0.03 \\
    Gaussian Noise (GN)             & 0.80 & 0.83 & +0.03 \\
    Texture‐Map Compression (TMC)   & 0.65 & 0.70 & +0.05 \\
    Quantization Position (QP)      & 0.75 & 0.78 & +0.03 \\
    Simplification w/ Texture (SWT) & 0.80 & 0.83 & +0.03 \\
    Simplification w/o Texture (SOT)& 0.70 & 0.75 & +0.05 \\
    Mixed Quantization (MQ)         & 0.80 & 0.85 & +0.05 \\
    Geometry+Texture Compression (GTC) & 0.55 & 0.60 & +0.05 \\
    \bottomrule
  \end{tabular}
\end{table*}

We conducted an evaluation of MicroRicci against three state-of-the-art discrete Ricci-flow solvers: ORC-Pool, Graph Neural Ricci Flow (GNRF), and Learning-Ricci on our 110-mesh testbed drawn from SJTU-TMQA. Throughout, all methods use the same sparse cotangent Laplacian and terminate at $\|\bm{s}\|_\infty < 10^{-4}$.

Figure~\ref{fig:convergence} presents the residual-decay curves over 500 iterations. Rather than unrealistically smooth exponentials, each run shows small oscillations reflecting real mesh irregularities and simulated noise. Even under these perturbations, MicroRicci (purple) drives the maximum residual down roughly two-to-three times faster than its competitors, converging in about 400 steps versus 800--950 for the baselines. This consistent speedup holds across independent runs (20 per method), with the shaded bands capturing run-to-run variability.

These trends manifest clearly in the per-mesh iteration counts (Fig.~\ref{fig:iterations} and Table~\ref{tab:main}). Whereas ORC-Pool, GNRF, and Learning-Ricci require $800 \pm 120$, $900 \pm 135$, and $950 \pm 140$ iterations on average, MicroRicci finishes in only $400 \pm 80$ iterations. This cuts the median by more than half. An ablation study (Table~\ref{tab:ablation}) shows that removing either the selector or regressor degrades performance significantly, underscoring that both tiny MLPs (1K and 200 parameters) are essential for the observed gains.

On a commodity T4 GPU, each baseline iteration costs 0.80\,ms (sparse mat--vec only), while MicroRicci adds just 0.20\,ms for the selector and 0.05\,ms for the regressor, for a total of 1.05\,ms (Fig.~\ref{fig:overhead}). Combined with the 2--3$\times$ reduction in iterations, this yields a net speedup in wall-clock time (Table~\ref{tab:main}), all without resorting to expensive global factorizations.

Crucially, faster convergence does not compromise final quality. Figure~\ref{fig:curvature} shows that MicroRicci achieves the lowest post-flow curvature spread (std.\ dev.\ of Gaussian curvature), improving on ORC-Pool (0.19), GNRF (0.195), and Learning-Ricci (0.192) by a small but meaningful margin. This indicates that our self-tuning updates not only accelerate the greedy loop but also steer it to an equally or more uniform curvature distribution.

We further validate perceptual alignment by correlating RMS UV distortion with Mean Opinion Scores (MOS). As plotted in Fig.~\ref{fig:texture}, MicroRicci’s residual-driven corrections produce a tight negative correlation ($r = -0.93$) between computed distortion and human judgments. Table~\ref{tab:distortion} breaks this down by distortion type, where we observe consistent SRCC improvements of +0.03--+0.05 over ORC-Pool on challenging categories like mixed quantization and texture compression.

Finally, we assess geometric fidelity via two per-triangle metrics: angular deviation and area-ratio error (Fig.~\ref{fig:geometry}). MicroRicci reduces median angular errors from 2.8--3.0$^\circ$ down to 2.6$^\circ$ and area-ratio errors from 0.14--0.15 to 0.13, demonstrating that the adaptive, local updates preserve and even enhance mesh quality relative to fixed-rule baselines.

In sum, MicroRicci offers a promising trade-off: it is a purely local, $O(E)$-cost solver that converges 2–3× faster (reducing iterations from $950\pm140$ to $400\pm80$), incurs only a 31\% per-iteration overhead, and matches or improves both perceptual and geometric quality (curvature spread down to 0.185, angular error to $2.6^\circ$, UV‐MOS correlation $r=-0.93$). At the same time, its reliance on two learned modules means performance can degrade (by up to 10\% in iteration count) on mesh distortions or topologies not seen during training. Addressing this sensitivity via online adaptation, augmentation with out-of-distribution examples, or incorporating topology‐aware features helps build our primary direction for future work to further bolster MicroRicci’s plug-and-play robustness.

\section{Conclusion}\label{sec:conclusion}
We have so far presented \textbf{MicroRicci}, a purely local Ricci-flow solver that leverages greedy syndrome-decoding and two tiny MLPs to eliminate global solves and hand-tuned parameters. Empirically, it achieves a 2–3× reduction in iterations (from $950\pm140$ to $400\pm80$), an 8\% decrease in curvature spread (std.\ dev.\ from 0.19 to 0.185), and a UV-distortion–MOS correlation of $-0.93$, all while incurring only +0.25 ms per iteration. The result is a 1.8× net runtime speedup, superior angular and area-ratio fidelity, and strong perceptual alignment. MicroRicci’s combination of efficiency, robustness, and fidelity paves the way for scalable, self-tuning curvature flows in graphics, simulation, and network analysis.

\bibliography{references}
\bibliographystyle{ieeetr}

\appendix
\section{Detailed Proofs}

\subsection{Proof of Theorem~\ref{thm:convex}}\label{app:proof_convex}
\begin{proof}
  
  We start by recalling the discrete Ricci energy
  \[
    E(x) \;=\; \int_{0}^{x} \sum_{i=1}^n K_i(u)\,du_i,
  \]
  where each curvature $K_i(u)$ depends smoothly on the log‐radius vector $u\in\R^n$.  
  
  By definition, the Hessian of $E$ is
  \[
    \frac{\partial^2 E}{\partial x_i\,\partial x_j}
    \;=\;
    \frac{\partial}{\partial x_i}\Bigl(K_j(x)\Bigr).
  \]
  
  It is proven (e.g.\ Springborn et al.~\cite{Springborn2008SGP}) that
  \[
    \frac{\partial K_j}{\partial x_i}
    \;=\;
    H_{ij},
  \]
  where $H$ is the cotangent‐weight Laplacian.  
  
  This follows that
  \[
    \nabla^2 E(x) \;=\; H.
  \]

  Next, we recall that $H$ has zero row sums and \emph{positive} diagonal entries, while its off‐diagonals are non‐positive. Under the usual Delaunay condition on the mesh, all cotangent weights are non‐negative, so $H$ is a symmetric M‐matrix which is strictly positive‐definite on the hyperplane
  \(
    \sum_{i=1}^n x_i = 0
  \).
  
  Equivalently, for any nonzero vector $v$ satisfying $\sum_i v_i=0$, one shows
  \[
    v^\top H\,v
    \;=\;
    \sum_{i<j}(-H_{ij})(v_i - v_j)^2
    \;>\;0.
  \]
  
  This positive‐definiteness implies $E$ is strictly convex when restricted to the affine subspace $\{\sum_i x_i=0\}$.

  Strict convexity on that subspace guarantees a unique minimizer $\bar x$ with $\nabla E(\bar x)=0$, i.e.\ $K(\bar x)=0$.  
  
  Finally, since the discrete Ricci-flow ODE
  \(\dot x = -\,\nabla E(x)\)
  is a gradient descent on a strictly convex function, standard results in ODE theory (e.g.\ Lyapunov arguments) ensure that every trajectory converges exponentially to this unique minimizer.  
\end{proof}

\subsection{Proof of Lemma~\ref{lem:mono}}\label{app:proof_mono}
\begin{proof}
  Let $\s=H\,x$ be the current residual vector, and pick an index
  \(
    i^* \in \arg\max_i |s_i|,
  \)
  so that $|s_{i^*}| = \|\s\|_\infty$.  
  
  Our greedy update modifies only $x_{i^*}$ by
  \[
    x ~\longmapsto~ x - \varepsilon\,s_{i^*}\,e_{i^*},
  \]
  which in turn changes the syndrome to
  \[
    \s_{\rm new}
    \;=\;
    H\bigl(x - \varepsilon\,s_{i^*}e_{i^*}\bigr)
    \;=\;
    \s \;-\;\varepsilon\,s_{i^*}\,H\,e_{i^*}.
  \]
  
  We denote by $H_{k,i^*}$ the $(k,i^*)$ entry of $H$.  
  
  It follows that
  \[
    s_{\rm new}^k
    = s_k \;-\;\varepsilon\,s_{i^*}\,H_{k,i^*}.
  \]
  
  In particular, for the \emph{selected} component $k=i^*$,
  \[
    s_{\rm new}^{i^*}
    = s_{i^*} - \varepsilon\,s_{i^*}\,H_{i^*,i^*}
    = (1 - \varepsilon\,H_{i^*,i^*})\,s_{i^*}.
  \]
  
  Since $H_{i^*,i^*}>0$ and $\varepsilon H_{i^*,i^*}\le 1$, it follows
  \(\bigl|s_{\rm new}^{i^*}\bigr| < |s_{i^*}|\),  
  so the maximum of $|\s|$ strictly decreases in that coordinate.

  For any other coordinate $k\neq i^*$, we have
  \[
    \bigl|s_{\rm new}^k\bigr|
    \;\le\;
    |s_k| + \varepsilon\,|s_{i^*}|\,|H_{k,i^*}|
    \;\le\;
    \|\s\|_\infty \;+\;\varepsilon\,\|\s\|_\infty\,\|H\|_\infty
    \;\le\;
    2\,\|\s\|_\infty.
  \]
  
  But since at least one entry (the $i^*$th) strictly dropped below $\|\s\|_\infty$, the new infinity‐norm
  \(\|\s_{\rm new}\|_\infty = \max_k |s_{\rm new}^k|\)
  must be strictly less than the old norm $\|\s\|_\infty$.  
  
  This completes the proof of monotonicity.
\end{proof}

\subsection{Proof of Theorem~\ref{thm:complex}}\label{app:proof_complex}
\begin{proof}
  Here we denote the current maximum residual by
  \(\delta_t = \|\s^{(t)}\|_\infty\).  
  
  From Lemma~\ref{lem:mono}, each greedy step reduces this by at least
  \[
    \delta_t - \delta_{t+1}
    \;\ge\;
    \varepsilon\,\delta_t \,H_{i^*,i^*}
    \;\ge\;
    \varepsilon\,\tau,
  \]
  
  since we clip step‐sizes to ensure $\varepsilon H_{i^*,i^*}\ge\varepsilon\,\tau$.  Starting from $\delta_0=\|\s(0)\|_\infty$, after
  \[
    T \;\le\; \frac{\delta_0}{\varepsilon\,\tau}
  \]
  iterations we must have $\delta_T<\tau$, at which point the algorithm terminates.

  Finally, each iteration costs one sparse matrix–vector product (to compute $H\,x$) at $O(m)$ plus a constant‐time coordinate update and a scan to find the max entry (also $O(m)$).  
  
  As a result, the total work until convergence is
  \[
    O\!\bigl(T\cdot m\bigr)
    \;=\;
    O\!\Bigl(\tfrac{\|\s(0)\|_\infty}{\varepsilon\,\tau}\;m\Bigr)
    \;=\;
    O\!\bigl(\tfrac{m}{\tau}\bigr),
  \]
  as claimed.
\end{proof}

\end{document}